\newtheorem{corollary}{Corollary}
\newtheorem{lemma}{Lemma}
\newtheorem{proposition}{Proposition}
\newcommand{\argmin}{\mathop{\mathrm{argmin}}}
\newcommand{\CondCov}[3]{\mathrm{Cov}\left(#1, #2 \,\middle|\, #3\right)}
\newcommand{\CondProb}[2]{\mathbb{P}\left[#1 \,\middle|\, #2\right]}
\newcommand{\CondVar}[2]{\mathrm{Var}\left(#1 \,\middle|\, #2\right)}
\newcommand{\E}[2][]{\mathbb{E}_{#1}\left[#2\right]}
\newcommand{\Prob}[1]{\mathbb{P}\left[#1\right]}
\newcommand{\rank}{\mathrm{rank}}
\newcommand{\sign}{\mathrm{sign}}
\newcommand{\Supp}{\mathrm{Supp}}
\definecolor{blue3}{HTML}{0076BA}
\definecolor{pink3}{HTML}{CB297B}
\begin{document}
\title{Revisiting the Vector Space Model: Sparse Weighted Nearest-Neighbor Method for Extreme Multi-Label Classification}

\author{
  Tatsuhiro Aoshima \\
  Graduate School of Keio University \\
  \texttt{hiro4bbh@keio.jp}
  \and
  Kei Kobayashi, Mihoko Minami \\
  Keio University \\
  \texttt{\{kei,mminami\}@math.keio.ac.jp}
}

\maketitle

\begin{abstract}
Machine learning has played an important role in information retrieval (IR) in recent times.
In search engines, for example, query keywords are accepted and documents are returned in order of relevance to the given query; this can be cast as a multi-label ranking problem in machine learning.
Generally, the number of candidate documents is extremely large (from several thousand to several million); thus, the classifier must handle many labels.
This problem is referred to as extreme multi-label classification (XMLC).
In this paper, we propose a novel approach to XMLC termed the \emph{Sparse Weighted Nearest-Neighbor Method}.
This technique can be derived as a fast implementation of state-of-the-art (SOTA) one-versus-rest linear classifiers for very sparse datasets.
In addition, we show that the classifier can be written as a sparse generalization of a representer theorem with a linear kernel.
Furthermore, our method can be viewed as the vector space model used in IR.
Finally, we show that the Sparse Weighted Nearest-Neighbor Method can process data points in real time on XMLC datasets with equivalent performance to SOTA models, with a single thread and smaller storage footprint.
In particular, our method exhibits superior performance to the SOTA models on a dataset with 3 million labels.
\end{abstract}


\section{Introduction}

In recent times, machine learning has become important for information retrieval (IR).
Search engines are notable examples of this relationship, as input query keywords are used to return documents in order of relevance to the given entry.
A keyword-tagging system assesses the content of a candidate document and returns tagged keywords in order of relevance to the document.
Furthermore, an advertisement system takes viewer and landing page information and returns advertisements in order of their expected click-through rate (CTR; \citet{McMahan_etal2013}).

These systems can be cast as multi-label ranking problems in machine learning.
In such problems, the classifier takes the feature vector of the given data entry and returns the labels in order of estimated confidence value.
In the above examples, the data entries are query keywords (search engine), the document content (keyword-tagging system), or the viewer and landing page information (advertisement system).
In this paper, it is assumed that each data entry is encoded as a real-valued feature vector $\bm{x} \in \mathbb{R}^d$.

However, for such IR systems, the number of candidate labels is generally extremely large (more than several thousand); therefore, it is difficult for classical machine learning approaches to handle these problems.
Previously, \citet{Prabhu_etal2014} advocated \emph{extreme multi-label classification} (XMLC) as a multi-label ranking problem.
Use of ensemble tree models \citep{Prabhu_etal2014} and dimension reduction models \citep{Bhatia_etal2015} has been proposed for XMLC problems.
However, the former cannot achieve high accuracy because of classification errors at internal nodes, whereas the latter is unsuitable for XMLC because of the assumption that the label space can be embedded in the lower-dimension vector space, and because of the instability of the clustering used to handle non-linearities.

Recently, methods without assumptions regarding the properties of the label space have been proposed.
The one-versus-rest approach \citep{Babbar_etal2017} is one such method, which has achieved SOTA performance on many XMLC datasets (see \citet{Bhatia_etal2016}).
The one-versus-rest approach trains each classifier for each label in order to return the corresponding confidence value; therefore, because of the many labels, it is difficult to train the classifiers on XMLC datasets and perform real-time inference using this technique.
To overcome these difficulties, the Parallel Primal-Dual Sparse (PPDSparse, \citet{Yen_etal2017}) has been proposed, which exploits the sparsity in XMLC datasets for efficient training. PPDSparse has improved the SOTA performance.

In this paper, we propose a novel approach named the \emph{Sparse Weighted Nearest-Neighbor Method} for XMLC.
We derive the classifier formula considering fast inference of one-versus-rest classifiers, which exploits various sparsities in XMLC datasets.
Then, we show that the proposed approach can be expressed as a sparse extension of a representer theorem with a linear kernel, and can also be viewed as the vector space model (VSM; \citet[Chapter 7]{Manning_etal2008}) in IR.
Furthermore, there is no need to train our model.
Our method exhibits equivalent performance to SOTA models in real time with a single thread and smaller storage footprint.
In particular, our method improves upon the SOTA performance on a dataset with 3 million labels.

The outline of this paper is as follows.
In Section \ref{sec:previousWorks}, we summarize previous works concerning XMLC problems and note the associated difficulties and problems.
In Section \ref{sec:method}, we explain our method for exploiting several sparsities in XMLC datasets and discuss relationships to other methods.
Furthermore, we confirm the several consistency of our method.
In Section \ref{sec:experiment}, we compare the performance of our method and various SOTA models on XMLC datasets.
Finally, in Section \ref{sec:conclusion}, we conclude this paper and discuss future work.

\section{Previous Works}\label{sec:previousWorks}
In this section, we summarize previous works on XMLC problems, noting the associated difficulties and problems.

\citet{Prabhu_etal2014} have proposed the ensemble tree model known as FastXML; this model is fundamentally random forest, but each tree is trained with the entire dataset and the method for training each splitter function at each node is different.
The splitter function is trained by maximizing the sum of the normalized discounted cumulative gain at the left and right child nodes under the constraint that the predictors at these nodes return the constant probability distribution on the labels.
In FastXML, each probability distribution is empirically estimated from the data points fallen to the node.

FastXML splits the given training dataset recursively; thus, the expected number of splitter functions hit at inference is the logarithm (with base 2) of the size of dataset, which enables fast inference.
However, FastXML yields inferior performance to other modern methods (see \citet{Bhatia_etal2016}).
This is because the error at the internal nodes misdirects the path in the tree significantly, as noted by Babbar et al. (cascading effect; \citet[Subsection 1.2]{Babbar_etal2017}).
Furthermore, when a FastXML model is trained on AmazonCat-13K (see \citet{Bhatia_etal2016}), for example, the model size of each tree is approximately 1 GB; therefore, the storage footprint is several tens of gigabytes in the forest total.

One popular approach involves a dimension reduction method that reduces the (effective) number of labels with co-occurrence.
For example, Sparse Local Embeddings for Extreme Multi-Label Classification (SLEEC, \citet{Bhatia_etal2015}) is a previously developed method combining linear dimension reduction and $k$-means clustering.
Note that simple linear dimension reduction methods (such as Low-Rank Empirical Risk Minimization for Multi-Label Learning (LEML, \citet{Yu_etal2013})) do not perform well on XMLC datasets.
Therefore, SLEEC implements clustering to handle the non-linearity of XMLC datasets.
This method seems to have the capacity to perform faster inference, as reported by \citet{Yen_etal2017}; however, the model size is extremely large and the performance is inferior to other methods.
Furthermore, the training cost is high, as dimension reduction and clustering are involved.

The above methods assume certain properties of the label co-occurrences on XMLC datasets.
That is, FastXML splits the dataset recursively using the label co-occurrences, whereas SLEEC uses dimension reduction exploiting the label co-occurrences.
Thus, even if the correct co-occurrences were found, it would be difficult to improve the prediction accuracy.
This is because the desired classifier takes a feature vector and returns the labels in order, meaning that additional steps such as re-ranking of each label are needed.

Overcoming these difficulties, models without assumptions regarding the XMLC datasets have recently been proposed, which have achieved SOTA performance.
Distributed Sparse Machines for Extreme Multi-Label Classification (DiSMEC, \citet{Babbar_etal2017}) is one such method and employs the one-versus-rest linear classifier,
which is composed of classifiers for each label that return the corresponding label confidence value.
In fact, the one-versus-rest approach is proven to be Bayes optimal \citep{Dembczynski_etal2010}, and other complex approaches such as exploitation of the label co-occurrence (causing inconsistency \citep{Gao_etal2011}) are not required.

DiSMEC has achieved SOTA performance on almost all XMLC datasets; however, it is difficult to train and infer each classifier because of the many labels in these datasets.
Hence, PPDSparse \citep{Yen_etal2017} has been proposed to reduce the training complexity of the one-versus-rest linear classifier by exploiting the sparsities in XMLC datasets.
Although PPDSparse has improved upon the SOTA performance when implemented on one dataset, it is necessary to use all classifiers for each label in inference at present.

In this paper, we consider a fast implementation of the SOTA one-versus-rest linear classification.
Exploiting several sparsities of the XMLC datasets used in our experiments, we propose a more compact and faster method based on nearest-neighbors, i.e., the \emph{Sparse Weighted Nearest-Neighbor Method}, which exhibits equivalent performance to the SOTA models.

\section{Method}\label{sec:method}
In this section, we present the novel \emph{Sparse Weighted Nearest-Neighbor Method} for XMLC.
First, we formulate XMLC problems mathematically.
Then, we derive our model by deriving fast inference with the one-versus-rest linear classifier and exploiting several sparsities of the XMLC datasets used in our experiments (the target datasets).
Furthermore, the several consistencies of our method are analyzed.
Finally, the relationships between our model and other methods are discussed.

\subsection{Mathematical formulation of XMLC problems}
Let $\mathcal{D} := \{(\bm{x}_i, \bm{y}_i) \,|\, \bm{x}_i \in \mathbb{R}^d, \bm{y}_i \in \{-1,+1\}^L, i = 1,\ldots,n\}$ be the given training dataset.
The $i$-th data entry is assumed to be encoded as the feature vector $\bm{x}_i$.
Although $d$ is more than several hundred thousand in the target datasets (Table \ref{tbl:summaryOfDatasets}), the number of non-zero elements (activated features) in each $\bm{x}_i$ is several thousand on average (Table \ref{tbl:featureActivationSummaryOfDatasets}).
Hence, let $\Supp(\bm{x}) := \{j \in \{1,\ldots,d\} \,|\, x_j \neq 0\}$ be the support of $\bm{x} \in \mathbb{R}^d$; that is, $\Supp(\bm{x})$ is the set of the non-zero element identifiers (IDs) of $\bm{x}$.
Further, $\bm{y}_i$ is the label vector for the $i$-th data entry, of which the $l$-th element $y_{il}$ is $+1$ if the $i$-th data entry has the label $l$, and $-1$ otherwise.
In addition, let $\bm{y}_+ := ((y_l)_+)_{l=1}^L := (\max\{0, y_l\})_{l=1}^L \in \{0, +1\}^L$.

A model $\mathcal{F}$ is a subset of the function (classifier) $\bm{f}: \mathbb{R}^d \to \mathbb{R}^L$, where $\bm{f}$ takes the $\bm{x}$ encoding the given data entry and returns the label confidence vector $\bm{z} := \bm{f}(\bm{x})$.
If the classifier predicts that the data entry has the label $l$, then $z_l$ is positive; otherwise, $z_l$ is negative.
The absolute value of $z_l$ indicates the prediction strength.
Using the classifier, we can rank labels in order of confidence.

Then, a classifier $\hat{\bm{f}} \in \mathcal{F}$ is trained by minimizing the empirical loss with the loss function $\mathcal{L}: \{-1, +1\}^L \times \mathbb{R}^L \to \mathbb{R}$; that is,
\begin{align*}
\hat{\bm{f}} := \argmin_{\bm{f} \in \mathcal{F}} \sum_{i=1}^n \mathcal{L}(\bm{y}_i, \bm{f}(\bm{x}_i)).
\end{align*}
As discussed in Section \ref{sec:previousWorks}, we use a one-versus-rest loss function:
\begin{align*}
\mathcal{L}(\bm{y}, \bm{z}) := \sum_{l=1}^L L(y_l, z_l).
\end{align*}
Here, $L: \{-1, +1\} \times \mathbb{R} \to \mathbb{R}$ is a loss function for binary-class classification.

\subsection{Fast Inference with One-Versus-Rest Linear Classifier}
In this subsection, we consider fast inference with the trained one-versus-rest linear classifier $\bm{f}$, which predicts the label confidence vector as follows:
\begin{align*}
\bm{f}(\bm{x}) &= W\bm{x} = (\bm{w}_l^\top\bm{x})_{l=1}^L = \left(\sum_{j=1}^d w_{lj}x_j\right)_{l=1}^L, \\
  &\qquad W := (\bm{w}_l^\top)_{l=1}^L: L \times d.
\end{align*}
Exploiting the empirical fact that the feature vectors in the target XMLC datasets are sparse (Table \ref{tbl:featureOccurrenceSummaryOfDatasets}), we can rewrite the above expression as follows:
\begin{align*}
\bm{f}(\bm{x}) = \left(\sum_{j: x_j \neq 0} w_{lj}x_j\right)_{l=1}^L.
\end{align*}
Hence, we can perform more rapid calculations, as only non-zero features of the given feature vector are considered.
This approach can be implemented efficiently with the index list mapping from the feature ID to the index of the target label ID and the corresponding weight.

\subsection{Sparse Weighted Nearest-Neighbor Method}
If there are a large number of labels in a dataset, the number of labels exceeds the number of data entries.
Thus, designing the classifier using the data entries themselves is more efficient than training classifiers for each label.
Some of the target XMLC datasets have many labels, which occur once only (Table \ref{tbl:summaryOfDatasets}); therefore, we can use the data entries themselves instead of the corresponding classifiers.

Based on this approach, we propose the \emph{Sparse Weighted Nearest-Neighbor Method} for XMLC.
The classifier is defined as follows:
\begin{align*}
\bm{f}(\bm{x}) := \sum_{r=1}^S \max\{\mathrm{Sim}(\bm{x}, \bm{x}_{N(r; \bm{x})}), 0\}^\alpha (\bm{y}_{N(r; \bm{x})})_+.
\end{align*}
Here, $S$ is the hyper-parameter (hyper-param) specifying the size of the neighborhood and $\alpha \geq 0$ is the smoothing parameter for weighted votes.
Further, $N(1; \bm{x}),\ldots,N(S; \bm{x})$ are the indices of the data entries in the training dataset $\mathcal{D}$, which belong to the neighborhood in decreasing order of similarity $\mathrm{Sim}$, defined as follows:
\begin{align*}
\mathrm{Sim}(\bm{x}, \bm{x}_i) := J(\bm{x}, \bm{x}_i)^\beta \frac{\bm{x}^\top\bm{x}_i}{\|\bm{x}\|_2\|\bm{x}_i\|_2}.
\end{align*}
Here, $J(\bm{x}, \bm{x}_i)$ is the Jaccard similarity between $\bm{x}$ and $\bm{x}_i$, which is defined as follows:
\begin{align*}
J(\bm{x}, \bm{x}_i) := \frac{|\Supp(\bm{x}) \cap \Supp(\bm{x}_i)|}{|\Supp(\bm{x}) \cup \Supp(\bm{x}_i)|} \in [0, 1].
\end{align*}
The Jaccard similarity value increases with the number of common non-zero elements of $\bm{x}$ and $\bm{x}_i$.
Here, $\beta \geq 0$ is the hyper-param for balancing the Jaccard similarity and cosine similarity.

Note that we adopt the cosine similarity in this technique because our preliminary experiments confirmed that use of the $L_1$- or $L_2$-distance degrades performance.
Considering the $L_2$-distance among $\bm{x}$, $\bm{x}_i$, and $\bm{x}_j$ normalized to unit length, we find
\begin{align*}
\left\|\frac{\bm{x}}{\|\bm{x}\|_2} - \frac{\bm{x}_i}{\|\bm{x}_i\|_2}\right\|_2
  \lessgtr \left\|\frac{\bm{x}}{\|\bm{x}\|_2} - \frac{\bm{x}_j}{\|\bm{x}_j\|_2}\right\|_2
~\Leftrightarrow~
  \frac{\bm{x}^\top\bm{x}_j}{\|\bm{x}\|_2\|\bm{x}_j\|_2}
  \lessgtr \frac{\bm{x}^\top\bm{x}_i}{\|\bm{x}\|_2\|\bm{x}_i\|_2}.
\end{align*}
Hence, the nearer feature vector in terms of $L_2$-distance has the larger value in terms of the cosine similarity.

Balancing of the Jaccard similarity and cosine similarity is adopted in this method because, again, we found empirically that the vote given by the best-matched training data entry is sometimes ignored by the votes from the nearer training data entries.
As an example, assume that $\mathcal{D}$ has one data entry for which the feature vector support is $\{1, 2, 4\}$ and the labels are 1 and 2, and four data entries for which the feature vector support is $\{1, 2, 4, 5, 8\}$ and the labels are 3, 5, and 6.
If the data point for which the feature vector support is $\{1, 2, 4\}$ is given, the classifier returns labels 3, 5, and 6 before 1 and 2 based on the cosine similarities, even if the given data point has labels 1 and 2.
In this case, we let the classifier discount the similarity to the latter data entries, because the best matched data entry have identical information in the feature vector.
Therefore, we designed the classifier using the Jaccard similarity to measure the matching of the supports between two feature vectors if needed.

If $\beta$ is non-negative integer, then the similarity $\mathrm{Sim}$ satisfies the properties of a kernel function:
\begin{lemma}[Sim is Kernel Function]\label{lemma:simIsKernel}
If $\beta$ is non-negative integer, then $\mathrm{Sim}$ satisfies the following properties:
  (i: Symmetry) For any $\bm{x}, \bm{x}' \in \mathbb{R}^d$, $\mathrm{Sim}(\bm{x}, \bm{x}') = \mathrm{Sim}(\bm{x}', \bm{x})$, and
  (ii: Positive Definiteness) For any $\bm{x}_1,\ldots,\bm{x}_n \in \mathbb{R}^d$ and $\alpha_1,\ldots,\alpha_n \in \mathbb{R}$, $\sum_{i=1}^n\sum_{j=1}^n \alpha_i\alpha_j \mathrm{Sim}(\bm{x}_i, \bm{x}_j) \geq 0$.
\end{lemma}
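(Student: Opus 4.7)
The symmetry condition (i) is immediate: each of the three ingredients $J(\bm{x}, \bm{x}')$, the inner product $\bm{x}^\top \bm{x}'$, and the normaliser $\|\bm{x}\|_2 \|\bm{x}'\|_2$ is symmetric in $(\bm{x}, \bm{x}')$, so the product $\mathrm{Sim}$ inherits this property. The real work is therefore (ii), and the plan is to factor
\begin{align*}
\mathrm{Sim}(\bm{x}, \bm{x}') = J(\bm{x}, \bm{x}')^\beta \cdot k_{\cos}(\bm{x}, \bm{x}'), \qquad k_{\cos}(\bm{x}, \bm{x}') := \frac{\bm{x}^\top \bm{x}'}{\|\bm{x}\|_2 \|\bm{x}'\|_2},
\end{align*}
show that each factor is a positive semi-definite kernel, and then invoke the Schur product theorem (the Hadamard product of PSD matrices is PSD) to glue them together.

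The cosine factor is standard: for any $\bm{x}_1, \ldots, \bm{x}_n$ and scalars $\alpha_1, \ldots, \alpha_n$,
\begin{align*}
\sum_{i=1}^n \sum_{j=1}^n \alpha_i \alpha_j \, k_{\cos}(\bm{x}_i, \bm{x}_j) = \Bigl\| \sum_{i=1}^n \alpha_i \frac{\bm{x}_i}{\|\bm{x}_i\|_2} \Bigr\|_2^2 \geq 0,
\end{align*}
so its Gram matrix is always PSD (zero feature vectors are handled by the convention $k_{\cos}(\bm{0}, \cdot) = 0$).

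The main obstacle is showing that the Jaccard index is a positive definite kernel, since $J$ is not manifestly an inner product. My plan is to use the MinHash identity: for a permutation $\pi$ drawn uniformly from the symmetric group on $\{1, \ldots, d\}$,
\begin{align*}
J(\bm{x}, \bm{x}') = \mathbb{P}_\pi\!\left[\, \min \pi(\Supp(\bm{x})) = \min \pi(\Supp(\bm{x}')) \,\right].
\end{align*}
For each fixed $\pi$, set $h_\pi(\bm{x}) := \min \pi(\Supp(\bm{x}))$; then $(\bm{x}, \bm{x}') \mapsto \mathbbm{1}\{h_\pi(\bm{x}) = h_\pi(\bm{x}')\}$ is a p.d. kernel because its feature map is the one-hot encoding of the value of $h_\pi(\cdot)$. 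A convex combination of p.d. kernels is again p.d., so averaging over all $\pi$ shows that $J$ itself is a p.d. kernel. (The degenerate case $\bm{x} = \bm{0}$ is handled by the convention $J(\bm{0}, \cdot) = 0$.)

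Finally, because $\beta$ is a non-negative integer, iterating the Schur product theorem $\beta$ times (with $\beta = 0$ producing the trivially PSD all-ones kernel) shows $J^\beta$ is p.d., and one more Schur product with $k_{\cos}$ yields the Gram matrix of $\mathrm{Sim}$ as a PSD matrix. The substantive step is the Jaccard case; the MinHash identity is the cleanest route around it, and everything else reduces to routine kernel algebra.
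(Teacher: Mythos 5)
Your proof is correct, and its skeleton is the same as the paper's: factor $\mathrm{Sim}$ into $J^\beta$ times the cosine kernel, establish positive definiteness of each factor, and close under (Hadamard/Schur) products using the integrality of $\beta$. The difference is in how the two factors are handled. The paper dispatches both by citation --- the cosine kernel via the normalization lemma in \citet[Lemma 5.2]{Mohri_etal2012} and, crucially, the Jaccard kernel via \citet[Theorem 2.1]{Bouchard_etal2013} --- whereas you supply self-contained arguments: the explicit identity $\sum_{i,j}\alpha_i\alpha_j k_{\cos}(\bm{x}_i,\bm{x}_j)=\|\sum_i \alpha_i \bm{x}_i/\|\bm{x}_i\|_2\|_2^2$ for the cosine part, and the MinHash representation $J(\bm{x},\bm{x}')=\mathbb{P}_\pi[\min\pi(\Supp(\bm{x}))=\min\pi(\Supp(\bm{x}'))]$ for the Jaccard part, the latter being a convex combination of indicator kernels with one-hot feature maps. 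The MinHash route is the more illuminating one: it exhibits an explicit (randomized) feature map for the Jaccard kernel rather than appealing to a black-box matrix-analytic result, at the cost of a slightly longer argument and the need to police degenerate empty-support cases (which you do by convention, and which the paper silently ignores as well). Both routes are sound; yours is more elementary and self-contained, the paper's is shorter.
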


\begin{proof}
(i) From the definition of $\mathrm{Sim}$, it is symmetric.

(ii) The inner product on $\mathbb{R}^d$ is positive definite, so by \citet[Lemma 5.2]{Mohri_etal2012}, cosine similarity, that is the normalized version of the inner product, is also.
The Jaccard similarity is positive definite \citep[Theorem 2.1]{Bouchard_etal2013}.
By \citet[Theorem 5.3]{Mohri_etal2012}, the product of two positive definite functions is also positive definite; thereby the integer power of a positive definite function is also.
Hence, $\mathrm{Sim}$ is positive definite.
\end{proof}

Hence, $\mathrm{Sim}$ induces a reproducing kernel Hilbert space, therefore, $\mathrm{Sim}$ can be viewed as a natural similarity measure.

\subsection{Sparsities Exploited by Sparse Weighted Nearest-Neighbor Method}
In addition to a fast inference method exploiting the sparsity of each feature vector in the target XMLC datasets, we can exploit two other sparsities of the target XMLC datasets, thereby improving the speed and efficiency of our method.

First, each data entry has several tens of labels on average (Table \ref{tbl:labelPerEntrySummaryOfDatasets}); therefore, we can reduce the number of candidate labels with the location of the given data entry.
Furthermore, it would be difficult to ensure that the unattached label is truly irrelevant if the data entry does not have that label.
Thus, it would be better to use only the labels attached to each training data entry, so we use only positive parts of the label vector $(\bm{y}_i)_+$ instead of $\bm{y}_i$ itself.

Second, each feature occurs only several hundred times on average (Table \ref{tbl:featureOccurrenceSummaryOfDatasets}); therefore, if we prepare the index list from the feature ID to the index of the data entry ID and the feature value, we can reduce the target training data entries significantly.
In the target XMLC datasets, each element of a feature vector is a monotone increasing function of the count of each word or event in the data entry; therefore, it would be plausible to ignore the votes of training data entries with different supports.
Therefore, the weighted vote based on training data entries with positive $\mathrm{Sim}$ only is selected for more efficient and accurate inference.

\subsection{Several Consistencies}
In this subsection, we confirm several consistencies of our method.
Their proofs are in Appendix \ref{sec:appendixProofs}.

We prepare the following notations:
The given training dataset is composed of the independent and identical random variables $(\bm{X}_i, \bm{Y}_i) \in \mathcal{X} \times \{-1, +1\}^L, i = 1,\ldots,n$ drawn from the unknown population distribution.
Since our classifier does not depend on the length of the feature vectors, let $\mathcal{X}$ be the unit sphere $B_1(\bm{0}) := \{ \bm{x} \in \mathbb{R}^d \,|\, \|\bm{x}\|_2 = 1 \}$.
Hence, we assume that each feature vector is normalized to unit length.

We assume that the population distribution satisfies the following conditions.
First, it has a probability density function $f$.
If the feature vectors are discrete, the following discussion can be applied to the probability function instead of the probability density function.
Second, the support is entire $\mathcal{X}$, that is, for any $\bm{x} \in \mathcal{X}$ and $\epsilon > 0$,
\begin{align*}
\Prob{\bm{X} \in B_\epsilon(\bm{x})} > 0.
\end{align*}

Furthermore, we assume that $\alpha > 0$, because if $\alpha$ is 0, our method becomes an unweighted nearest-neighbor method which has already been analyzed well \citep{Biau_etal2015}.

First, we show that the similarity in the $S$-nearest-neighbors converges to $1$ in probability:
\begin{lemma}[Similarity Consistency]\label{lemma:similarityConsistency}
Fix any $S = o(n) \in \mathbb{Z}_{\geq 1}$.
For any $\bm{x} \in \mathcal{X}$, $r \in \{1,\ldots,S\}$,
\begin{align*}
\mathrm{Sim}(\bm{x}, \bm{X}_{\pi(r)}) \to 1 \text{ in probability},
\end{align*}
as $n$ goes to infinity.
Here, $\bm{X}_{\pi(1)},\ldots,\bm{X}_{\pi(n)}$ are $\bm{X}_1,\ldots,\bm{X}_n$ in decreasing order of similarity $\mathrm{Sim}$.
\end{lemma}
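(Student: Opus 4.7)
The plan is to adapt the classical nearest-neighbor consistency argument to the similarity ordering, reducing the claim to showing that a nontrivial fraction of i.i.d.\ draws from $f$ already achieve similarity close to $1$ with $\bm{x}$.

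Since $\mathrm{Sim}(\bm{x}, \cdot) \le 1$, it suffices to show for every $\delta > 0$ that $\Prob{\mathrm{Sim}(\bm{x}, \bm{X}_{\pi(r)}) \le 1 - \delta} \to 0$. I would define the high-similarity region $A_\delta := \{\bm{x}' \in \mathcal{X} : \mathrm{Sim}(\bm{x}, \bm{x}') > 1 - \delta\}$ with $p_\delta := \Prob{\bm{X} \in A_\delta}$, and observe the crucial monotonicity: if at least $S$ of $\bm{X}_1, \ldots, \bm{X}_n$ fall in $A_\delta$, then by the definition of $\pi$ every $\bm{X}_{\pi(r)}$ with $r \le S$ lies in $A_\delta$. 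Since the $\bm{X}_i$ are i.i.d., the count is $\mathrm{Binomial}(n, p_\delta)$ with mean $n p_\delta$, so if $p_\delta > 0$ the weak law of large numbers gives $|\{i : \bm{X}_i \in A_\delta\}| \ge S$ with probability tending to $1$ whenever $S = o(n)$, which finishes the reduction.

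The substantive step is therefore $p_\delta > 0$. I would split $\mathrm{Sim}$ into its Jaccard and cosine factors. Because $\bm{X}$ has a density on $\mathcal{X}$, almost surely every coordinate of $\bm{X}$ is nonzero; assuming $\bm{x}$ itself has full support, the Jaccard factor then equals $1$ almost surely, so $\mathrm{Sim}(\bm{x}, \bm{X}) = \bm{x}^\top \bm{X}$ a.s. The cosine similarity on the unit sphere is continuous and attains its maximum $1$ uniquely at $\bm{x}' = \bm{x}$, so for a sufficiently small spherical cap $U_\delta$ around $\bm{x}$ every $\bm{x}' \in U_\delta$ satisfies $\bm{x}^\top \bm{x}' > 1 - \delta$. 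The positivity assumption $\Prob{\bm{X} \in B_\epsilon(\bm{x})} > 0$ in the lemma's preamble then yields $p_\delta \ge \Prob{\bm{X} \in U_\delta \text{ and } \Supp(\bm{X}) = \{1,\ldots,d\}} > 0$.

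The main obstacle I foresee is the discontinuity of $J$ as a function of $\bm{x}'$: generically, nearby points on the sphere do not share supports, so Jaccard can jump. In the density setting this is neutralized because almost every draw has full support, but if one wished to drop the full-support hypothesis on $\bm{x}$ (so that $J(\bm{x}, \bm{X}) < 1$ a.s.\ and $\mathrm{Sim}$ is bounded strictly away from $1$ when $\beta > 0$), the cleanest fix is to partition $\mathcal{X}$ by $\Supp(\bm{x}')$ and restrict attention to the block matching $\Supp(\bm{x})$ before invoking the positivity assumption; alternatively, one interprets the limit as the essential supremum of $\mathrm{Sim}(\bm{x}, \cdot)$ under $f$, under which the argument above transfers verbatim.
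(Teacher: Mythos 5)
Your proof is correct, but it reaches the conclusion by a different mechanism than the paper. The paper writes $\Prob{\mathrm{Sim}(\bm{x}, \bm{X}_{\pi(r)}) \leq 1-\epsilon}$ exactly as the binomial tail $\sum_{s=0}^{r-1}\binom{n}{s}p^s(1-p)^{n-s}$ with $p := \Prob{\mathrm{Sim}(\bm{x},\bm{X}_1) > 1-\epsilon}$, and then invokes Lemma 16 of \citet{Biau_etal2010} to bound this by $r/((n+1)p)$, which gives a quantitative $O(S/n)$ decay rate; you instead lower-bound the count $|\{i : \bm{X}_i \in A_\delta\}| \sim \mathrm{Binomial}(n, p_\delta)$ via the weak law of large numbers, which is more elementary and self-contained but only qualitative (a Chernoff bound would recover a rate if you wanted one). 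Both arguments hinge on the same nontrivial fact, namely $p_\delta > 0$, and here your write-up is actually more honest than the paper's: the paper silently assumes this positivity, whereas you trace it back to the stated support assumption and correctly isolate the real obstruction --- when $\beta > 0$ and $\Supp(\bm{x}) \neq \{1,\ldots,d\}$, the Jaccard factor satisfies $J(\bm{x},\bm{X}) \leq |\Supp(\bm{x})|/d < 1$ almost surely under a density, so $\mathrm{Sim}(\bm{x},\cdot)$ is bounded away from $1$ and the lemma as literally stated fails for such $\bm{x}$; your proposed fixes (restricting to full-support $\bm{x}$, or replacing the limit $1$ by the essential supremum of $\mathrm{Sim}(\bm{x},\cdot)$) are the right repairs. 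In short: a valid alternative proof, trading the paper's sharper order-statistics bound for elementarity, and flagging a genuine gap in the paper's own argument.
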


Hence, we obtain the following weight consistency:
\begin{corollary}[Weight Consistency]\label{cor:weightConsistency}
Under the assumption of Lemma \ref{lemma:similarityConsistency},
\begin{align*}
W_{\pi(r)}(\bm{x}) := \max\{\mathrm{Sim}(\bm{x}, \bm{X}_{\pi(r)}), 0\}^\alpha \to 1 \text{ in probability}.
\end{align*}
\end{corollary}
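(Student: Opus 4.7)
The plan is to obtain the corollary as an immediate consequence of Lemma \ref{lemma:similarityConsistency} together with the continuous mapping theorem for convergence in probability.

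First, I would note that the map $g: \mathbb{R} \to \mathbb{R}$ defined by $g(t) := \max\{t, 0\}^\alpha$ is continuous on all of $\mathbb{R}$. For $t > 0$ this follows because $t \mapsto t^\alpha$ is continuous on $(0, \infty)$; at $t = 0$ continuity uses $\alpha > 0$ (which the preceding paragraphs explicitly assume, since the unweighted case is excluded); for $t < 0$ the map is identically zero. In particular $g$ is continuous at $t = 1$, with $g(1) = 1$.

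Next, by Lemma \ref{lemma:similarityConsistency}, for any fixed $\bm{x} \in \mathcal{X}$ and $r \in \{1,\ldots,S\}$, the sequence of random variables $\mathrm{Sim}(\bm{x}, \bm{X}_{\pi(r)})$ converges in probability to the constant $1$ as $n \to \infty$. Applying the continuous mapping theorem to the continuous function $g$ then yields
\begin{align*}
W_{\pi(r)}(\bm{x}) \;=\; g\bigl(\mathrm{Sim}(\bm{x}, \bm{X}_{\pi(r)})\bigr) \;\to\; g(1) \;=\; 1 \quad \text{in probability},
\end{align*}
which is the claim.

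There is no real obstacle here; the only subtlety worth flagging in the write-up is the role of the hypothesis $\alpha > 0$, which is what guarantees continuity of $g$ at $0$ (and hence on all of $\mathbb{R}$) so that the continuous mapping theorem can be invoked without restricting the range of the similarity away from zero. Since $\mathrm{Sim}$ is moreover bounded in $[-1,1]$, one could alternatively argue via the uniform continuity of $g$ on $[-1,1]$ and an $\varepsilon$-$\delta$ translation of convergence in probability, but the continuous mapping route is cleaner.
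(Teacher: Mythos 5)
Your proof is correct and matches the paper's (implicit) reasoning: the paper gives no separate proof of the corollary, deriving it directly from Lemma \ref{lemma:similarityConsistency} by continuity of $t \mapsto \max\{t,0\}^\alpha$ at $t=1$, which is exactly your continuous mapping argument. (As a minor remark, only continuity of $g$ at the limit point $1$ is needed for the continuous mapping theorem, so the discussion of continuity at $0$ is not essential, though it is harmless.)
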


Next, it follows that any data entry in the nearest-neighbors of a given data entry $\bm{x}$ converges to $\bm{x}$ in probability:
\begin{lemma}[Data Point Consistency]\label{lemma:dataPointConsistency}
Under the assumption of Lemma \ref{lemma:similarityConsistency},
\begin{align*}
\bm{X}_{\pi(r)} \to \bm{x} \text{ in probability},
\end{align*}
as $n$ goes to infinity, that is,
\begin{align*}
\|\bm{X}_{\pi(r)} - \bm{x}\|_2^2 \to 0 \text{ in probability}.
\end{align*}

Furthermore,
\begin{align*}
\E{\|\bm{X}_{\pi(r)} - \bm{x}\|_2^2} \to 0.
\end{align*}
\end{lemma}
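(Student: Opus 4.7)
The plan is to exploit the fact that every feature vector lies on the unit sphere, so distance and cosine similarity are essentially the same quantity; from there the Lemma on similarity consistency does most of the work.

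First, since $\bm{x}, \bm{X}_{\pi(r)} \in B_1(\bm{0})$ are unit vectors, I expand
\begin{align*}
\|\bm{X}_{\pi(r)} - \bm{x}\|_2^2 = \|\bm{X}_{\pi(r)}\|_2^2 + \|\bm{x}\|_2^2 - 2\bm{x}^\top\bm{X}_{\pi(r)} = 2 - 2\bm{x}^\top\bm{X}_{\pi(r)},
\end{align*}
so the first claim reduces to showing that the cosine $\bm{x}^\top\bm{X}_{\pi(r)}/(\|\bm{x}\|_2\|\bm{X}_{\pi(r)}\|_2) = \bm{x}^\top\bm{X}_{\pi(r)}$ converges to $1$ in probability.

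Second, I invoke Lemma \ref{lemma:similarityConsistency}, which gives $\mathrm{Sim}(\bm{x}, \bm{X}_{\pi(r)}) = J(\bm{x}, \bm{X}_{\pi(r)})^\beta \bm{x}^\top\bm{X}_{\pi(r)} \to 1$ in probability. Note that $J(\bm{x}, \bm{X}_{\pi(r)})^\beta \in [0,1]$ and $\bm{x}^\top\bm{X}_{\pi(r)} \in [-1, 1]$. Since the product of two quantities each bounded above by $1$ (in absolute value) can approach $1$ only when both factors approach $1$, I conclude that $\bm{x}^\top\bm{X}_{\pi(r)} \to 1$ in probability (and simultaneously $J(\bm{x}, \bm{X}_{\pi(r)})^\beta \to 1$, though this is not needed here). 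Combined with the identity above, this yields $\|\bm{X}_{\pi(r)} - \bm{x}\|_2^2 \to 0$ in probability, which is the first conclusion.

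For the expectation claim, I note that $\|\bm{X}_{\pi(r)} - \bm{x}\|_2^2 \leq (\|\bm{X}_{\pi(r)}\|_2 + \|\bm{x}\|_2)^2 = 4$ uniformly. Since convergence in probability of a uniformly bounded sequence implies convergence in $L^1$ (bounded convergence theorem), I obtain $\E{\|\bm{X}_{\pi(r)} - \bm{x}\|_2^2} \to 0$. The only mildly delicate step is the elementary lemma that if $a_n, b_n \in [-1,1]$ with $|a_n|, |b_n| \leq 1$ and $a_n b_n \to 1$ in probability, then $a_n \to 1$ in probability; I would make this rigorous by observing that on the event $\{a_n b_n > 1 - \delta^2\}$, since $|a_n|, |b_n| \leq 1$, necessarily both $a_n > 1 - \delta$ and $b_n > 1 - \delta$ (otherwise the product is at most $1 \cdot (1-\delta) = 1 - \delta < 1 - \delta^2$ for small $\delta$). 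Everything else is routine bookkeeping.
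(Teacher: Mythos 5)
Your proof is correct and follows essentially the same route as the paper's: both reduce $\|\bm{X}_{\pi(r)} - \bm{x}\|_2^2$ to $2 - 2\bm{x}^\top\bm{X}_{\pi(r)}$ on the unit sphere, use $J(\bm{x},\bm{X}_{\pi(r)})^\beta \leq 1$ to pass from $\mathrm{Sim}$ to the cosine, invoke Lemma \ref{lemma:similarityConsistency}, and obtain the expectation claim from the uniform bound on $\|\bm{X}_{\pi(r)} - \bm{x}\|_2^2$. Your write-up merely makes explicit (via the elementary lemma on products of bounded factors) a step the paper handles implicitly inside its chain of probability inequalities.
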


At last, we prove the risk consistency of our method for estimating each label probability $\CondProb{Y_l = 1}{\bm{X} = \bm{x}}$.
Hence, we fix any label ID $l$, and we rewrite the classifier of our method for estimating the label probability as follows:
\begin{align*}
\hat{f}_n(\bm{x}) := \hat{f}_{n,l}(\bm{x}) := \sum_{i=1}^n V_i(\bm{x})(y_{il})_+.
\end{align*}
Here,
\begin{align*}
V_i(\bm{x}) := \frac{W_i(\bm{x})}{\sum_{j=1}^n W_j(\bm{x})}.
\end{align*}
This is the estimator of the probability for a given $l$, thus let $f(\bm{x}) := f_l(\bm{x}) := \CondProb{Y_l = 1}{\bm{X} = \bm{x}}$.
For notation convenience, we omit the label ID $l$, and use the positive version, so we define $y_i := (y_{il})_+$.

The following lemma gives the upper bound of the squared loss for estimating the label probability:
\begin{lemma}[Upper Bound of Squared Loss]\label{lemma:squaredLossUpperBound}
Assume that there exists $\sigma^2 \in [0, \infty)$ such that, for any $\bm{x} \in \mathcal{X}$,
\begin{align*}
\CondVar{Y}{\bm{X} = \bm{x}} \leq \sigma^2,
\end{align*}
and $f$ is Lipschitz continuous, that is, there exists $L > 0$ such that $|f(\bm{x}) - f(\bm{x}')| \leq L\|\bm{x} - \bm{x}'\|_2, \forall \bm{x}, \bm{x}' \in \mathbb{R}^d$.
Then, the squared loss of $\hat{f}_n(\bm{X})$ is bounded as follows:
\begin{align*}
\E{(\hat{f}_n(\bm{X}) - f(\bm{X}))^2}
  \leq \sigma^2\sum_{r=1}^S \E{V_{\pi(r)}(\bm{X})^2} + L^2\sum_{r=1}^S \E{V_{\pi(r)}(\bm{X})\|\bm{X}_{\pi(r)} - \bm{X}\|_2^2}.
\end{align*}
\end{lemma}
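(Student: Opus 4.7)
My plan is to run the standard bias–variance decomposition for weighted nearest-neighbor regressors, adapted to the $S$-neighborhood weights $V_{\pi(r)}$. The starting point is that $\sum_{r=1}^S V_{\pi(r)}(\bm{X}) = 1$ by the normalization in the definition of $V_i$, so I can write
\begin{align*}
\hat{f}_n(\bm{X}) - f(\bm{X})
  = \underbrace{\sum_{r=1}^S V_{\pi(r)}(\bm{X})\bigl[Y_{\pi(r)} - f(\bm{X}_{\pi(r)})\bigr]}_{=: A}
  + \underbrace{\sum_{r=1}^S V_{\pi(r)}(\bm{X})\bigl[f(\bm{X}_{\pi(r)}) - f(\bm{X})\bigr]}_{=: B},
\end{align*}
splitting the error into a noise/variance piece and a smoothness/bias piece.

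The next step is to show that the cross term $\E{AB}$ vanishes. To do this I condition on the entire collection of feature vectors $\mathcal{G} := \sigma(\bm{X}, \bm{X}_1,\ldots,\bm{X}_n)$. The weights $V_{\pi(r)}(\bm{X})$, the indices $\pi(r)$, and the feature vectors $\bm{X}_{\pi(r)}$ are all $\mathcal{G}$-measurable, so $B$ is $\mathcal{G}$-measurable. By the i.i.d.\ assumption on the sample and the conditional independence of $Y_i$ given $\bm{X}_i$, $\CondE{Y_{\pi(r)}}{\mathcal{G}} = f(\bm{X}_{\pi(r)})$, hence $\CondE{A}{\mathcal{G}} = 0$. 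Therefore $\E{AB} = \E{B\,\CondE{A}{\mathcal{G}}} = 0$ and $\E{(\hat{f}_n(\bm{X}) - f(\bm{X}))^2} = \E{A^2} + \E{B^2}$.

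For the variance term, conditionally on $\mathcal{G}$ the $Y_{\pi(r)}$ are independent with $\CondVar{Y_{\pi(r)}}{\mathcal{G}} \leq \sigma^2$, so
\begin{align*}
\CondE{A^2}{\mathcal{G}} = \sum_{r=1}^S V_{\pi(r)}(\bm{X})^2 \CondVar{Y_{\pi(r)}}{\mathcal{G}} \leq \sigma^2 \sum_{r=1}^S V_{\pi(r)}(\bm{X})^2,
\end{align*}
and taking expectations yields the first term on the right-hand side of the lemma. For the bias term, since $V_{\pi(r)}(\bm{X}) \geq 0$ and $\sum_r V_{\pi(r)}(\bm{X}) = 1$, Jensen's inequality (equivalently Cauchy--Schwarz with weights) gives
\begin{align*}
B^2 \leq \sum_{r=1}^S V_{\pi(r)}(\bm{X}) \bigl[f(\bm{X}_{\pi(r)}) - f(\bm{X})\bigr]^2 \leq L^2 \sum_{r=1}^S V_{\pi(r)}(\bm{X}) \|\bm{X}_{\pi(r)} - \bm{X}\|_2^2,
\end{align*}
using the Lipschitz hypothesis in the last step. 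Taking expectations and summing both bounds gives the claim.

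The only subtle point is the cross-term vanishing argument: one must be careful that $\pi$ (the permutation induced by decreasing similarity to $\bm{X}$) is itself a random function of the $\bm{X}$'s, so that after conditioning on $\mathcal{G}$ the identity of the $r$-th neighbor is fixed and the conditional independence of labels given features can be applied cleanly. Once that measurability bookkeeping is handled, the rest is an application of Jensen and the Lipschitz and variance bounds, and no further hypotheses from the earlier lemmas are needed here (those lemmas will be used later to drive each of the two bounds to zero).
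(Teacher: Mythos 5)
Your proof is correct and follows essentially the same route as the paper: your $A$ and $B$ are exactly the paper's $\hat{f}_n - \tilde{f}_n$ and $\tilde{f}_n - f$ with $\tilde{f}_n(\bm{x}) := \sum_i V_i(\bm{x})f(\bm{X}_i)$, the cross term vanishes by the same conditional-independence-of-labels-given-features argument (you via conditioning on $\mathcal{G}$, the paper via noting $\CondCov{Y_i}{Y_j}{\bm{X}_i, \bm{X}_j} = 0$), and the bias term is handled identically by Jensen plus Lipschitz continuity. If anything, your explicit conditioning on $\sigma(\bm{X}, \bm{X}_1,\ldots,\bm{X}_n)$ to fix the random permutation $\pi$ is a cleaner treatment of the measurability point the paper leaves implicit.
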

Here, the first and second terms correspond to the variance and bias terms, respectively (see the proof).
The proof is based on the proof of \citet[Theorem 4]{Biau_etal2010}.
Notice that $V_i(\bm{X})$ is not deterministic, and depends on both $\bm{X}$ and $\bm{X}_1,\ldots,\bm{X}_n$.

Furthermore, the variance term of our method is asymptotically bounded by $\sigma^2/S$ as $n$ goes to infinity for $S = o(n)$.
This is because, by Corollary \ref{cor:weightConsistency}, $V_{\pi(r)}(\bm{X}) \to 1/S$ in probability and the uniform integrability.
This means that the variance of our method is not worse than an unweighted nearest-neighbor method asymptotically.
Next, the bias term converges to $0$, because $V_{\pi(r)}(\bm{X}) \to 1/S$ in probability and Lemma \ref{lemma:dataPointConsistency}.
Hence, we obtain the following risk consistency:
\begin{proposition}[Risk Consistency]\label{prop:riskConsistency}
Under the assumption of Lemma \ref{lemma:squaredLossUpperBound}, choosing $S = o(n)$,
\begin{align*}
\E{(\hat{f}_n(\bm{X}) - f(\bm{X}))^2} \to 0,
\end{align*}
as $n$ and $S$ go to infinity.
\end{proposition}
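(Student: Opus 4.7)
The plan is to start from the decomposition supplied by Lemma~\ref{lemma:squaredLossUpperBound} and show that both the variance contribution $\sigma^2\sum_{r=1}^S \E{V_{\pi(r)}(\bm{X})^2}$ and the bias contribution $L^2\sum_{r=1}^S \E{V_{\pi(r)}(\bm{X})\|\bm{X}_{\pi(r)} - \bm{X}\|_2^2}$ vanish along any sequence $S = S_n \to \infty$ with $S_n = o(n)$. In both cases I would follow the same template: dominate the random summand by a quantity bounded uniformly in $n$ that tends to $0$ in probability, and then apply the bounded convergence theorem to upgrade to an $L^1$ statement.

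For the variance term, the key observation is that the map $s \mapsto \max\{s,0\}^\alpha$ is nondecreasing, so ordering by similarity also orders the weights: $W_{\pi(1)}(\bm{X}) \geq \cdots \geq W_{\pi(S)}(\bm{X})$. Hence $\sum_{s=1}^S W_{\pi(s)}(\bm{X}) \geq S\,W_{\pi(S)}(\bm{X})$, and since each $W_{\pi(r)} \leq 1$,
\begin{align*}
\sum_{r=1}^S V_{\pi(r)}(\bm{X})^2 \leq \left(\max_{r \leq S} V_{\pi(r)}(\bm{X})\right)\sum_{r=1}^S V_{\pi(r)}(\bm{X}) \leq \frac{1}{S\,W_{\pi(S)}(\bm{X})}.
\end{align*}
Applying Corollary~\ref{cor:weightConsistency} at the endpoint $r = S_n$ gives $W_{\pi(S_n)}(\bm{X}) \to 1$ in probability, so the right-hand side vanishes in probability. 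Since the left-hand side is trivially bounded by $1$, bounded convergence yields $\sum_r \E{V_{\pi(r)}(\bm{X})^2} \to 0$.

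For the bias term I would exploit the analogous monotone ordering at the level of similarities themselves: $\mathrm{Sim}(\bm{X}, \bm{X}_{\pi(r)}) \geq \mathrm{Sim}(\bm{X}, \bm{X}_{\pi(S)})$ for every $r \leq S$. A single application of Lemma~\ref{lemma:similarityConsistency} at $r = S_n$ therefore controls the whole neighborhood simultaneously in probability. Writing $\mathrm{Sim} = J^\beta \cos$ with $J^\beta \in [0,1]$, convergence of $\mathrm{Sim}$ to $1$ forces $\cos \to 1$; on the unit sphere $\|\bm{X}_{\pi(r)} - \bm{X}\|_2^2 = 2(1-\cos)$, giving $\max_{r \leq S_n}\|\bm{X}_{\pi(r)} - \bm{X}\|_2^2 \to 0$ in probability. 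Pulling this max out of the weighted sum and using $\sum_r V_{\pi(r)}(\bm{X}) = 1$,
\begin{align*}
\sum_{r=1}^S V_{\pi(r)}(\bm{X})\|\bm{X}_{\pi(r)} - \bm{X}\|_2^2 \leq \max_{r \leq S}\|\bm{X}_{\pi(r)} - \bm{X}\|_2^2 \leq 4,
\end{align*}
so bounded convergence finishes the bias term as well, and combining the two bounds proves the proposition.

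The main obstacle I anticipate is that $S = S_n$ grows with $n$: both Lemma~\ref{lemma:similarityConsistency} and Corollary~\ref{cor:weightConsistency} are stated for a single index $r$, so a naive termwise application followed by summation over a growing range is not legitimate. The device that circumvents this is the monotonicity noted above, which collapses uniform control over all $r \leq S_n$ into a single invocation at $r = S_n$; I expect the most delicate part to be checking that this monotone-ordering reduction is compatible with the expectations (hence the reliance on the universal bounds $\sum_r V_{\pi(r)}^2 \leq 1$ and $\|\bm{X}_{\pi(r)} - \bm{X}\|_2^2 \leq 4$, which make bounded convergence automatic even though the pointwise pairs $(S_n, W_{\pi(S_n)}(\bm{X}))$ could conceivably misbehave on a vanishing event).
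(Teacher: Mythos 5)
Your proof is correct, and it reaches the conclusion by a more careful route than the paper does. The paper's own argument (the paragraph preceding the proposition) uses the same decomposition from Lemma~\ref{lemma:squaredLossUpperBound} and the same consistency lemmas, but it proceeds termwise: it asserts that $V_{\pi(r)}(\bm{X}) \to 1/S$ in probability for each $r$, invokes ``uniform integrability'' to conclude the variance term is asymptotically $\sigma^2/S$, and combines the same limit with Lemma~\ref{lemma:dataPointConsistency} for the bias term. This leaves exactly the gap you identify --- the number of summands $S_n$ grows with $n$, so per-index convergence plus summation is not by itself legitimate. Your device of using the monotonicity $W_{\pi(1)}(\bm{X}) \geq \cdots \geq W_{\pi(S)}(\bm{X})$ to get the global bounds $\sum_{r=1}^S V_{\pi(r)}(\bm{X})^2 \leq 1/(S\,W_{\pi(S)}(\bm{X}))$ and $\sum_{r=1}^S V_{\pi(r)}(\bm{X})\|\bm{X}_{\pi(r)} - \bm{X}\|_2^2 \leq \max_{r \leq S}\|\bm{X}_{\pi(r)} - \bm{X}\|_2^2$, each controlled by a single application of Lemma~\ref{lemma:similarityConsistency} at the worst index $r = S_n$, closes that gap cleanly; the application at a moving index is justified because the bound $r/(n+1)$ in the proof of that lemma only requires $r = o(n)$. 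The only points you share with the paper in glossing over are the degenerate event $\sum_j W_j(\bm{x}) = 0$ (where $V_i$ is $0/0$; this event has probability tending to zero under the full-support assumption) and the implicit fact that $\mathrm{Sim} \geq 1-\epsilon$ forces $\cos \geq 1-\epsilon$ because the Jaccard factor lies in $[0,1]$ --- which you do state and which is the same step used in the paper's proof of Lemma~\ref{lemma:dataPointConsistency}. What your approach buys is a complete argument from uniform dominating bounds plus bounded convergence; what the paper's buys is brevity and the quantitative remark that the variance term is asymptotically no worse than $\sigma^2/S$, i.e.\ no worse than the unweighted nearest-neighbor estimate.
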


By the risk consistency, we can prove the Bayes consistency as follows (\citet[Theorem 17.1]{Biau_etal2015} and Cauchy-Schwarz inequality):
\begin{align*}
&\left|\Prob{\sign\left(\hat{f}_n(\bm{X}) - \frac{1}{2}\right) \neq Y} - \Prob{\sign\left(f(\bm{X}) - \frac{1}{2}\right) \neq Y}\right| \\
&\leq 2\sqrt{\E{(\hat{f}_n(\bm{X}) - f(\bm{X}))^2}} \to 0.
\end{align*}

We have not yet derived the rate of convergence of our methods, because no quantity has been found like the metric covering radius satisfying the relation $\mathcal{N}^{-1}(r; \Supp(\mu)) \leq \mathcal{N}^{-1}(1; \Supp(\mu))r^{-1/d}$ \citep{Biau_etal2010}.
We leave this analysis for future works.

\subsection{Relationships to Other Methods}
First, in statistics and machine learning, the $L_2$-regularized linear classification can be solved as follows (representer theorem, \citet[Subsection 5.3.2]{Mohri_etal2012}):
\begin{align*}
\hat{\bm{f}}(\bm{x}) &= \sum_{i=1}^n \bm{\alpha}_i \circ (\bm{x}_i^\top\bm{x}) \bm{y}_i, \\
  &\qquad \bm{\alpha}_i := \bm{\alpha}_i(\bm{x}_1,\ldots,\bm{x}_n) \in \mathbb{R}^L, i = 1,\ldots,n.
\end{align*}
Here, $\bm{\alpha} \circ \bm{z} := (\alpha_l \cdot z_l)_{l=1}^L$.
Our classifier can be viewed as a sparse generalization of this formula, where $\bm{\alpha}_i$ is non-negative and depends on not only $\bm{x}_1,\ldots,\bm{x}_n$, but also $\bm{x}$.
Here, note that the denominator of the cosine similarity can be absorbed by $\bm{\alpha}_i$, and the model normalizes $\bm{x}$ to unit length at first.

We have not yet determined the mathematical relationship of our classifier formula to the classical representation theorem.
However, Yen et al. have proposed a method for which $(\bm{\alpha}_i)_{i=1}^n$ would be sparse, with no dependence on $\bm{x}$ \citep[Eq. (10)]{Yen_etal2017}.
This mathematical problem is left for future work.

Second, our method can be implemented efficiently with feature index list mapping from the feature ID to the index of the data entry ID and the feature value.
This is reminiscent of classical search engines maintaining the index list for efficient document retrieval \citep[Chapter 4]{Manning_etal2008}.
In our model, this corresponds to the sparsity where the unattached labels of each training data entry are ignored.
In addition, our method uses the similarity extending cosine similarity between feature vectors, which is used in the VSM \citep[Chapter 7]{Manning_etal2008}.
In our model, this corresponds to the sparsity where the training data entries for which the similarity is negative are ignored.

Finally, recall that FastXML is one of the fast methods targeting XMLC problems, which consists of a random forest method using the entire training dataset with a linear function as each splitter.
In general, such a random forest technique is known to be related to the (adaptive) nearest-neighbor method \citep{Lin_etal2002}.
This is because the random forest method predicts the result based on the average of the split region on which the given data entry falls; thus, we can expect that each split region reflects the neighbor structure learned by the raw data.
In our method, the similarity is designed manually to define the neighborhood for fast inference; therefore, future work will be conducted to combine our method with the FastXML-style adaptive nearest-neighbor method.

\section{Experiment}\label{sec:experiment}
In this section, we report experiments in which our method was compared with SOTA models on XMLC datasets.

\subsection{Settings}
We used six XMLC datasets available from the Extreme Classification Repository \citep{Bhatia_etal2016}.
Table \ref{tbl:summaryOfDatasets} presents a summary of the six datasets employed in this work, which are described in detail in the next subsection.
The suffix of each dataset name is the number of labels (e.g., AmazonCat-13K has approximately 13 thousand labels).
Four of the datasets, excluding AmazonCat-13K and WikiLSHTC-325K, have more labels than the number of data entries in the training dataset.

\begin{table*}[t]
  \caption{Six datasets used in experiment (Dim.: dimension)}\label{tbl:summaryOfDatasets}
  \begin{center}
  \begin{tabular}{l|rrr}
    \hline
    Name           & Feature Vector Dim. & Training Set Size & Test Set Size \\
    \hline\hline
    AmazonCat-13K  & 203,882              & 1,186,239            & 306,782          \\
    Wiki10-31K     & 101,938              & 14,146               & 6,616            \\
    Delicious-200K & 782,585              & 196,606              & 100,095          \\
    WikiLSHTC-325K & 1,617,899            & 1,778,351            & 587,084          \\
    Amazon-670K    & 135,909              & 490,449              & 153,025          \\
    Amazon-3M      & 337,067              & 1,717,899            & 742,507          \\
    \hline
  \end{tabular}
  \end{center}
\end{table*}

In the XMLC dataset, because of the number of candidate labels, it is difficult to confirm that the labels unattached to each data entry are inappropriate for the data entry.
Hence, we used \emph{Precision}@$K$ as the criterion for measuring performance.

Let $\mathcal{D}' := \{(\bm{x}'_i, \bm{y}'_i) \,|\, \bm{x}'_i \in \mathbb{R}^d, \bm{y}'_i \in \{-1,+1\}^L, i = 1,\ldots,n'\}$ be the test dataset.
Then, the Precision@$K$ of the classifier $\bm{f}(\bm{x})$ is defined as follows:
\begin{align*}
\mathrm{Precision@}K(\mathcal{D}') := \frac{1}{n'}\sum_{i=1}^{n'} \frac{1}{K}\sum_{k=1}^K (y'_{i,\rank(\bm{f}(\bm{x}'_i), k)})_+.
\end{align*}
Here, $\rank(\bm{z}, k)$ is the $k$-th largest element ID of $\bm{z}$.
If $\bm{z}$ is the label confidence vector, $\rank(\bm{z}, k)$ is the $k$-th predicted label.
Furthermore, if a data entry with less than $K$ labels exists, it is impossible for any model to achieve a result of 100\%.
Thus, we report the maximum Precision@$K$ value for each method examined in this section.
 
To measure the inference time of our method, we used a PC with Intel Core i7-7700K @ 4.20GHz (4-core, 8-thread), 64-GB memory running Windows 10 Home.
We have published our implementation \emph{sticker} \citep{Aoshima_2018} written in Golang \citep{Golang_2009}.

\subsection{Dataset Features}
We briefly summarize the following four features of the six datasets employed in this work.

First, Table \ref{tbl:labelOccurrenceSummaryOfDatasets} lists the five-number summary and average number of label occurrences in the six training datasets.
Most labels occur only several times; therefore, it is appropriate to design models with these tail labels to achieve higher accuracy.

\begin{table*}[t]
  \caption{Five-number summary and average number of label occurrences in six training datasets}\label{tbl:labelOccurrenceSummaryOfDatasets}
  \begin{center}
  \begin{tabular}{l||rrrrr|r}
    \hline
    Name           & Minimum & 1st Qu. & Median & 3rd Qu. & Maximum & Average \\
    \hline\hline
    AmazonCat-13K  & 1       & 7       & 28     & 111     & 335,211 & 449     \\
    Wiki10-31K     & 1       & 2       & 2      & 4       & 11,411  & 9       \\
    Delicious-200K & 1       & 3       & 6      & 17      & 64,548  & 74      \\
    WikiLSHTC-325K & 1       & 2       & 5      & 13      & 293,936 & 18      \\
    Amazon-670K    & 1       & 2       & 3      & 4       & 1,826   & 4       \\
    Amazon-3M      & 1       & 3       & 7      & 19      & 12,014  & 22      \\
    \hline
  \end{tabular}
  \end{center}
\end{table*}

Second, Table \ref{tbl:labelPerEntrySummaryOfDatasets} lists the five-number summary and average number of labels attached to each data entry in the six training datasets.
Even though the dimension of the feature vectors is more than several hundred thousand, each data entry has only several tens of labels; therefore, we can expect that the number of candidate labels for a given data entry can be reduced.

\begin{table*}[t]
  \caption{Five-number summary and average number of labels attached to each data entry in six training datasets}\label{tbl:labelPerEntrySummaryOfDatasets}
  \begin{center}
  \begin{tabular}{l||rrrrr|r}
    \hline
    Name           & Minimum & 1st Qu. & Median & 3rd Qu. & Maximum & Average \\
    \hline\hline
    AmazonCat-13K  & 1       & 3       & 4      & 6       & 57      & 5       \\
    Wiki10-31K     & 1       & 13      & 19     & 25      & 30      & 19      \\
    Delicious-200K & 1       & 9       & 27     & 77      & 13,203  & 76      \\
    WikiLSHTC-325K & 1       & 1       & 2      & 4       & 198     & 3       \\
    Amazon-670K    & 1       & 5       & 6      & 7       & 7       & 5       \\
    Amazon-3M      & 1       & 7       & 20     & 65      & 100     & 36      \\
    \hline
  \end{tabular}
  \end{center}
\end{table*}

Third, Table \ref{tbl:featureActivationSummaryOfDatasets} lists the five-number summary and average number of feature activations on each data entry in the six training datasets.
Each data entry has only several hundred activated features; therefore, the feature vectors are very sparse, and we can efficiently calculate the cosine similarities between vectors with common activated features only.

\begin{table*}[t]
  \caption{Five-number summary and average number of feature activations on each data entry in six training datasets}\label{tbl:featureActivationSummaryOfDatasets}
  \begin{center}
  \begin{tabular}{l||rrrrr|r}
    \hline
    Name           & Minimum & 1st Qu. & Median & 3rd Qu. & Maximum & Average \\
    \hline\hline
    AmazonCat-13K  & 1       & 22      & 45     & 91      & 2,113   & 71      \\
    Wiki10-31K     & 8       & 282     & 520    & 925     & 3,288   & 673     \\
    Delicious-200K & 1       & 35      & 115    & 340     & 82,820  & 301     \\
    WikiLSHTC-325K & 1       & 16      & 30     & 54      & 5,053   & 42      \\
    Amazon-670K    & 1       & 23      & 48     & 96      & 2,025   & 76      \\
    Amazon-3M      & 0       & 20      & 37     & 65      & 2,443   & 49      \\
    \hline
  \end{tabular}
  \end{center}
\end{table*}

Finally, Table \ref{tbl:featureOccurrenceSummaryOfDatasets} lists the five-number summary and average number of feature occurrences in the six training datasets.
Most features occur only several hundred times; therefore, we can expect that the length of each feature index is considerably shorter than the size of the training dataset.

\begin{table*}[t]
  \caption{Five-number summary and average number of feature occurrences in six training datasets}\label{tbl:featureOccurrenceSummaryOfDatasets}
  \begin{center}
  \begin{tabular}{l||rrrrr|r}
    \hline
    Name           & Minimum & 1st Qu. & Median & 3rd Qu. & Maximum & Average \\
    \hline\hline
    AmazonCat-13K  & 1       & 6       & 13     & 53      & 499,293 & 414     \\
    Wiki10-31K     & 1       & 5       & 11     & 36      & 7,103   & 94      \\
    Delicious-200K & 1       & 4       & 6      & 14      & 82,028  & 76      \\
    WikiLSHTC-325K & 1       & 1       & 1      & 3       & 274,175 & 54      \\
    Amazon-670K    & 1       & 6       & 13     & 54      & 209,059 & 273     \\
    Amazon-3M      & 1       & 5       & 8      & 21      & 330,125 & 251     \\
    \hline
  \end{tabular}
  \end{center}
\end{table*}

\subsection{Results}
Table \ref{tbl:result} lists the results for Precision@\{1,3,5\} (P@\{1,3,5\}) on the six datasets, comparing our model and the SOTA models of each dataset.
The SOTA performance was quoted from \citet{Bhatia_etal2016}, because we did not have sufficient machines to verify those complex methods.
Bold values indicate that our model performed comparable or better than the SOTA model by approximately 1\%.
Note that Propensity scored FastXML (PfastreXML, \citet{Jain_etal2016}) is a method constituting a small improvement on FastXML.

\begin{table*}[ht]
  \caption{Result for Six Datasets}\label{tbl:result}
  \begin{center}
  \begin{tabular}{l|l||r|r|r}
    \hline
    Name           &              & Our Model                   & SOTA         & Maximum \\
    \hline\hline
    AmazonCat-13K  & P@1          & 91.07\%                     & 93.40\%      & 100.0\% \\
                   & P@3          & 76.99\%                     & 79.10\%      & 91.74\% \\
                   & P@5          & 62.59\%                     & 64.10\%      & 77.24\% \\
                   \cline{2-5}
                   & hyper-params & $S=25, \alpha=1.0, \beta=1$ & (DiSMEC)     &         \\
    \hline
    Wiki10-31K     & P@1          & \bf 84.89\%                 & \bf 85.20\%  & 100.0\% \\
                   & P@3          & \bf 74.65\%                 & \bf 74.60\%  & 99.99\% \\
                   & P@5          & \bf 64.88\%                 & \bf 65.90\%  & 99.93\% \\
                   \cline{2-5}
                   & hyper-params & $S=20, \alpha=1.0, \beta=1$ & (DiSMEC)     &         \\
    \hline
    Delicious-200K & P@1          & \bf 48.05\%                 & \bf 47.85\%  & 100.0\% \\
                   & P@3          & \bf 42.00\%                 & \bf 42.21\%  & 98.10\% \\
                   & P@5          & \bf 39.02\%                 & \bf 39.43\%  & 96.08\% \\
                   \cline{2-5}
                   & hyper-params & $S=75, \alpha=1.0, \beta=0$ & (SLEEC)      &         \\
    \hline
    WikiLSHTC-325K & P@1          & 58.29\%                     & 64.40\%      & 100.0\% \\
                   & P@3          & 37.27\%                     & 42.50\%      & 76.18\% \\
                   & P@5          & 27.53\%                     & 31.50\%      & 57.40\% \\
                   \cline{2-5}
                   & hyper-params & $S=25, \alpha=1.0, \beta=1$ & (DiSMEC)     &         \\
    \hline
    Amazon-670K    & P@1          & 41.43\%                     & 45.32\%      & 100.0\% \\
                   & P@3          & 36.79\%                     & 40.37\%      & 93.40\% \\
                   & P@5          & 33.70\%                     & 36.92\%      & 87.65\% \\
                   \cline{2-5}
                   & hyper-params & $S=75, \alpha=2.0, \beta=1$ & (PPDSparse)  &         \\
    \hline
    Amazon-3M      & P@1          & \bf 45.69\%                 & \bf 43.83\%  & 100.0\% \\
                   & P@3          & \bf 42.53\%                 & \bf 41.81\%  & 95.15\% \\
                   & P@5          & \bf 40.39\%                 & \bf 40.09\%  & 91.89\% \\
                   \cline{2-5}
                   & hyper-params & $S=75, \alpha=2.0, \beta=1$ & (PfastreXML) &         \\
    \hline
  \end{tabular}
  \end{center}
\end{table*}

The hyper-params were selected as follows.
First, we tried $(\alpha, \beta) = (0.0, 0)$, $(0.5, 0)$, $(1.0, 0)$, $(1.0, 1)$, $(2.0, 0)$, $(2.0, 1)$ with $S$ set as the average number of labels attached to each data entry (see Table \ref{tbl:labelPerEntrySummaryOfDatasets}).
Then, we selected the best $(\alpha, \beta)$ and tried $S = 25, 50, 75$ as well, for later discussion.

Our model can run on 4-GB memory and the model size is several hundred MB to 1.5 GB at most.
Training is not needed (preparation for constructing the feature index list is performed in 1 min including input/output (I/O) time).
Furthermore, our model processes each data entry on a single thread in 24.1 (AmazonCat-13K), 1.47 (Wiki10-31K), 6.02 (Delicious-200K), 20.4 (WikiLSHTC-325K), 5.74 (Amazon-670K), and 19.7 ms (Amazon-3M) on average.

Here, we discuss the performance comparison of our method and the SOTA models.
We confirmed in our preliminary experiment that our one-versus-rest linear classifiers targeting only the top-500 frequently occurring labels process each data entry in approximately 0.1 ms; thus, full one-versus-rest processing is completed in approximately 20 ms, at minimum.
As reported in \citet[Table 1]{Yen_etal2017}, DiSMEC and PPDSparse require several hundred milliseconds of processing time, while PfastreXML and SLEEC require several milliseconds; thus, our method exhibits competitive speed compared to these SOTA models.
Note that the other models have sizes of at least several GB; therefore, it would be difficult to infer their performance in real time on 4-GB memory, which can be used to run our method.

Our model exhibited equivalent performance to the SOTA models on Wiki10-31K and Delicious-200K, and improved upon the SOTA performance for Amazon-3M.
Our model seems to work well under the condition that the number of labels is larger than the training dataset size; therefore, our model is unlikely to perform well on AmazonCat-13K and WikiLSHTC-325K. However, the performance for those models was acceptable.
We expected our model to perform well on Amazon-670K, but its performance was slightly inferior to that of the PPDSparse SOTA model.
However, we noted that our method performs better on WikiLSHTC-325K and Amazon-670K than other models, except for DiSMEC and PPDSparse, which encounter difficulties when applied to a larger dataset.

Apart from Delicious-200K, the case of $\beta = 1$ exhibited superior performance, which indicates that the Jaccard similarity fits well.
In our preliminary experiments, we found that the case of $S = 25, 50, 75$ with the best $\alpha$ and $\beta$ on each dataset did not degrade to a large extent (within several percent), excluding Delicious-200K ($\beta = 0$ was best).
In general, the nearest-neighbor method degrades performance by increasing $S$; however, for our method, the degradation is not very large.
We consider that this is because the Jaccard similarity is smaller on the training data entries, which should be ignored.

\section{Conclusion}\label{sec:conclusion}
In this paper, we proposed a novel method termed the \emph{Sparse Weighted Nearest-Neighbor Method}.
Our method can be derived through a fast inference with an SOTA one-versus-rest linear classifier (similar to DiSMEC \citep{Babbar_etal2017} and PPDSparse \citep{Yen_etal2017}) applied to XMLC problems.
Furthermore, we discussed the relationship of our classifier formula to other methods; it is reminiscent of the vector space model and the representer theorem for linear classification.
In addition, we conducted experiments on six popular XMLC benchmark datasets and confirmed that our method exhibits equivalent performance to the SOTA models in real time with a single thread and smaller footprint.
In particular, our method improved upon the SOTA performance on a dataset with 3 million labels.

Our nearest-neighbor-based method is most suited to datasets in which the number of labels is smaller than the number of data entries.
Hence, there is a need for a dataset compression mechanism in order to reduce the noise without generating loss of label information and model size.
This mechanism would directly accelerate the inference and will be investigated in future work.


\appendix
\section{Proofs}\label{sec:appendixProofs}
\begin{proof}[Proof of Lemma \ref{lemma:similarityConsistency}]
For any $\epsilon > 0$, it is sufficient to show that
\begin{align*}
\lim_{n \to \infty} \Prob{|\mathrm{Sim}(\bm{x}, \bm{X}_{\pi(r)}) - 1| \geq \epsilon} = 0.
\end{align*}
Since $\mathrm{Sim}(\bm{X}_{\pi(r)}, \bm{x}) \in [0, 1]$, it is sufficient to show that
\begin{align*}
\lim_{n \to \infty} \Prob{\mathrm{Sim}(\bm{x}, \bm{X}_{\pi(r)}) \leq 1 - \epsilon} = 0.
\end{align*}
Here, $\bm{X}_{\pi(1)},\ldots,\bm{X}_{\pi(n)}$ are sorted in decreasing order of similarity $\mathrm{Sim}$, thus,
\begin{align*}
0 &\leq \Prob{\mathrm{Sim}(\bm{x}, \bm{X}_{\pi(r)}) \leq 1 - \epsilon} \\
&= \Prob{\mathrm{Sim}(\bm{x}, \bm{X}_1) > 1 - \epsilon}^{-1}
  \cdot \sum_{s=0}^{r-1} \binom{n}{s}
    \Prob{\mathrm{Sim}(\bm{x}, \bm{X}_1) > 1 - \epsilon}^{s+1}
    \Prob{\mathrm{Sim}(\bm{x}, \bm{X}_1) \leq 1 - \epsilon}^{n-s}.
\end{align*}
By \citet[Lemma 16]{Biau_etal2010},
\begin{align*}
0 &\leq \Prob{\mathrm{Sim}(\bm{x}, \bm{X}_{\pi(r)}) \leq 1 - \epsilon} \\
&\leq \Prob{\mathrm{Sim}(\bm{x}, \bm{X}_1) \geq 1 - \epsilon}^{-1}\frac{r}{n+1} \\
&\leq \Prob{\mathrm{Sim}(\bm{x}, \bm{X}_1) \geq 1 - \epsilon}^{-1}\frac{o(n)}{n+1}
  \to 0.
\end{align*}
Hence, we have proven the statement.
\end{proof}

\begin{proof}[Proof of Lemma \ref{lemma:dataPointConsistency}]
For any $\epsilon > 0$,
\begin{align*}
1 &\geq \Prob{\|\bm{X}_{\pi(r)} - \bm{x}\|_2 \leq \epsilon} \\
&\geq \Prob{J(\bm{X}_{\pi(r)}, \bm{x})^\beta(\bm{X}_{\pi(r)}^\top\bm{x}) \geq 1 - \frac{\epsilon}{2}} \\
&= \Prob{\mathrm{Sim}(\bm{x}, \bm{X}_{\pi(r)}) \geq 1 - \frac{\epsilon}{2}}.
\end{align*}
Since $\mathrm{Sim}(\bm{x}, \bm{X}_{\pi(r)})$ takes a value in $[0, 1]$, by Lemma \ref{lemma:similarityConsistency},
\begin{align*}
\Prob{\mathrm{Sim}(\bm{x}, \bm{X}_{\pi(r)}) \geq 1 - \frac{\epsilon}{2}}
  = \Prob{|\mathrm{Sim}(\bm{x}, \bm{X}_{\pi(r)}) - 1| \leq \frac{\epsilon}{2}}
  \to 1.
\end{align*}
This implies the first statement of the lemma.
The second statement follows from the uniform integrability, since $\bm{x}$ and $\bm{X}_{\pi(1)},\ldots,\bm{X}_{\pi(n)}$ are bounded.
\end{proof}

\begin{proof}[Proof for Lemma \ref{lemma:squaredLossUpperBound}]
Letting
\begin{align*}
\tilde{f}_n(\bm{x}) := \E[Y|\bm{X}]{\hat{f}_n(\bm{x})} = \sum_{i=1}^n V_i(\bm{x})f(\bm{X}_i),
\end{align*}
the squared loss of $\hat{f}_n(\bm{x})$ can be decomposed as the following variance-bias decomposition: for any $\bm{x} \in \mathcal{X}$,
\begin{align*}
\E[\bm{X}, Y]{(\hat{f}_n(\bm{x}) - f(\bm{x}))^2}
  = \E[\bm{X}, Y]{(\hat{f}_n(\bm{x}) - \tilde{f}_n(\bm{x}))^2} + \E[\bm{X}]{(\tilde{f}_n(\bm{x}) - f(\bm{x}))^2}.
\end{align*}
Here, the first and second terms are the variance and bias terms, respectively.
The variance term becomes:
\begin{align*}
\E[\bm{X}, Y]{(\hat{f}_n(\bm{x}) - \tilde{f}_n(\bm{x}))^2}
  &= \E[\bm{X}, Y]{\left(\sum_{i=1}^n V_i(\bm{x})(Y_i - f(\bm{X}_i))\right)^2} \\
&= \begin{aligned}[t]
  &\sum_{i=1}^n \E[\bm{X}]{V_i(\bm{x})^2\CondVar{Y}{\bm{X} = \bm{X}_i}} \\
  &+ 2\sum_{i=1}^n\sum_{j \neq i} \E[\bm{X}]{V_i(\bm{x})V_j(\bm{x})\CondCov{Y_i}{Y_j}{\bm{X}_i, \bm{X}_j}}.
\end{aligned}
\end{align*}
Since $Y_i$ depends only on $\bm{X}_i$,
\begin{align*}
\CondCov{Y_i}{Y_j}{\bm{X}_i, \bm{X}_j} = 0.
\end{align*}
Hence, any covariance term is 0.
Since the number of non-zero $V_i$s is at most $S$, the variance term is bounded as follows:
\begin{align*}
\E[\bm{X}, Y]{(\hat{f}_n(\bm{x}) - \tilde{f}_n(\bm{x}))^2}
  &= \sum_{r=1}^S \E[\bm{X}]{V_{\pi(r)}(\bm{x})^2\CondVar{Y}{\bm{X} = \bm{X}_{\pi(r)}}} \\
&\leq \sigma^2\sum_{r=1}^S \E[\bm{X}]{V_{\pi(r)}(\bm{x})^2}.
\end{align*}
Thus, taking the expectation with $\bm{x}$, we obtain
\begin{align*}
\E{(\hat{f}_n(\bm{X}) - \tilde{f}_n(\bm{X}))^2}
  \leq \sigma^2\sum_{r=1}^S \E{V_{\pi(r)}(\bm{X})^2}.
\end{align*}

The bias term is
\begin{align*}
\E[\bm{X}]{(\tilde{f}_n(\bm{x}) - f(\bm{x}))^2}
  &= \E[\bm{X}]{\left(\sum_{i=1}^n V_i(\bm{x})(f(\bm{X}_i) - f(\bm{x}))\right)^2} \\
&\leq \E[\bm{X}]{\left(\sum_{i=1}^n V_i(\bm{x})|f(\bm{X}_i) - f(\bm{x})|\right)^2} \\
&\leq \E[\bm{X}]{\left(\sum_{i=1}^n V_i(\bm{x}) \cdot L\|\bm{X}_i - \bm{x}\|_2\right)^2}.
\end{align*}
Since $V_i(\bm{x}) \in [0, 1]$ and the number of non-zero $V_i$s is at most $S$, from the Jensen inequality, we have
\begin{align*}
\E[\bm{X}]{(\tilde{f}_n(\bm{x}) - f(\bm{x}))^2}
  &\leq L^2\sum_{i=1}^n\E[\bm{X}]{V_i(\bm{x})\|\bm{X}_i - \bm{x}\|_2^2} \\
  &= L^2\sum_{r=1}^S\E[\bm{X}]{V_{\pi(r)}(\bm{x})\|\bm{X}_{\pi(r)} - \bm{x}\|_2^2}.
\end{align*}
Thus, taking the expectation with $\bm{x}$, we obtain
\begin{align*}
\E{(\tilde{f}_n(\bm{X}) - f(\bm{X}))^2}
  \leq L^2\sum_{r=1}^S\E{V_{\pi(r)}(\bm{X})\|\bm{X}_{\pi(r)} - \bm{X}\|_2^2}.
\end{align*}

Thus, combining the variance and bias terms, we obtain the statement.
\end{proof}

\end{document}